\newcommand\numberthis{\stepcounter{equation}\tag{\theequation}}
\newcommand{\T}{\mathsf{T}}
\DeclareMathOperator*{\argmin}{argmin}
\DeclareMathOperator{\sgn}{sgn}
\DeclareMathOperator{\nullspace}{null}
\DeclareMathOperator{\rowspace}{row}
\begin{document}

\title{gLOP: the global and Local Penalty\\
for Capturing Predictive Heterogeneity}

\author{\name Rhiannon V. Rose \email rrose24@uwo.ca \\
       \addr Department of Epidemiology \& Biostatistics\\
       Western University\\
       London, ON, Canada 
       \AND
       \name Daniel J. Lizotte \email dlizotte@uwo.ca \\
       \addr Department of Computer Science, Department of Epidemiology \& Biostatistics \\
       Western University\\
       London, ON, Canada }

\maketitle

\begin{abstract}
When faced with a supervised learning problem, we hope to have rich enough data to build a model that predicts future instances well. However, in practice, problems can exhibit {\em predictive heterogeneity}: most instances might be relatively easy to predict, while others might be {\em predictive outliers} for which a model trained on the entire dataset does not perform well. Identifying these can help focus future data collection. We present gLOP, the global and Local Penalty, a framework for capturing predictive heterogeneity and identifying predictive outliers. gLOP is based on penalized regression for multitask learning, which improves learning by leveraging training signal information from related tasks. We give two optimization algorithms for gLOP, one space-efficient, and another giving the full regularization path. We also characterize uniqueness in terms of the data and tuning parameters, and present empirical results on synthetic data and on two health research problems.
\end{abstract}

\section{Introduction}

We are motivated by prediction problems in healthcare where we have data about a population of patients, but only limited data about each individual patient. As an example, we will present a problem where the goal is to use a self-reported scale of depressive symptoms to predict a clinician-rated scale which has been deemed more suitable for decision-making. We have data on hundreds of different patients, but no more than 15 paired observations per patient. We would like to determine whether a single model relating self-reported scores to clinician-rated scores predicts all patients adequately, or if there are patients for whom such a model needs tailoring to work well for them.
We also present another problem where the goal is to predict Parkinson's disease symptom progression from speech waveform features. Both of these problems have potential applications in telehealth. In studies of depression, although self-reported symptom scores are often collected, clinician-rated scores are commonly used to support decisions about altering a patient's treatment plan \citep{rush04sequenced}. For patients who may not have regular access to a trained assessor, the combination of a self-reported score and a prediction of the clinician-rated score could enable more timely revision of their treatment plan. The Parkinson's disease study had a similar intent; effective predictive models would help remote patients better track their symptoms and inform any revisions to their treatment plan. 

In both of these problems (and in many others) we expect a degree of {\em predictive heterogeneity.} That is, we expect that for each patient, we could build an effective predictive model given enough patient-specific examples, but: 1) we do not have access to enough data per patient and 2) we suspect that individual heterogeneity will preclude building a ``global model'' that works well for all patients. We expect that some patients will be {\em predictive outliers} whose models differ substantially from the norm. While this cannot directly help us build better models for new patients, if we find evidence that predictive outliers exist, then we may be able to improve prediction by gathering more features on the patients we already have in order to help distinguish such outliers {\em a priori} in future models, and thus improve predictive power. 
If we see no evidence for predictive outliers, a better strategy might be to gather more training examples rather than more features on the existing examples.

Our contributions are as follows. We present the global and Local Penalty (gLOP) model, which learns a predictive model with a {\em global} component that applies to all patients and a {\em local} component that captures individual variation, while performing simultaneous feature selection for both. We describe in detail how it is related to previous approaches, and we present two optimization techniques for gLOP, one that is very space-efficient and one that provides the entire regularization path. We characterize the conditions under which the gLOP estimate is unique. We provide empirical evidence that gLOP has better in-population predictive performance than previous approaches. Finally, we show how gLOP can be used to detect predictive outliers by applying it to two health research problems.

\section{Background}
\newcommand{\y}{\ensuremath{\boldsymbol{y}}}
\newcommand{\g}{\ensuremath{\boldsymbol{g}}}
\newcommand{\bbeta}{\ensuremath{\boldsymbol{\beta}}}

Lowercase letters (e.g.\ $c, \alpha$) denote scalars, bold lowercase letters denote vectors (e.g.\ $\boldsymbol y$), and uppercase letters denote matrices (e.g.\ $L$). Superscripts index elements of a list (e.g.\ $L^k$ is the $k$th matrix in a list) and subscripts index matrix columns (e.g.\ $L^k_j$ is the $j$th column of the $k$th matrix) or vector elements (e.g.\ $\alpha_j$ is the $j$th element of the vector $\boldsymbol\alpha$.)

\subsection{Penalization and the Lasso}

Both gLOP and its related methods are based on feature selection through {\em penalization} or {\em regularization}. Given an $n \times p$ design matrix of predictor variables $X$, and a binary $n \times 1$ response variable $\y$, the general form of a penalized regression problem is
$\hat\bbeta = \argmin_{\bbeta} \mathcal{L}(X,\y) + \mathcal{P}(\bbeta)$,
where $\mathcal{L}$ is a loss function measuring how well the model $\bbeta$ fits the data, and $\mathcal{P}$ is a penalty term on the complexity of $\bbeta$. The \textit{Least Absolute Shrinkage and Selection Operator} (lasso) \citep{tibshirani1996} is one such method, whose parameter estimates are given by
$
\hat{\bbeta}^{\text{lasso}}(\lambda) = \argmin_{\bbeta}  \tfrac{1}{2} \| \y - X\bbeta\|^2_2 + \lambda\|\bbeta\|_{1}
$,
where $\lambda$ controls the amount of shrinkage of the estimates. When $\lambda = 0$, the model is unpenalized  with all features present; higher values of $\lambda$ will cause more coefficients of $\bbeta$ to be shrunk to 0, giving a sparser model that includes only the most relevant features.

\subsection{Penalization Methods for Multi-task Learning}

Multi-task learning methods \citep{caruana1997} were originally described as learning from data about a large number of related ``tasks'' when there is only limited data about each individual task. This framework has a clear connection to the problem we face with limited patient data. We make one terminological change in this work: We use the word ``patient'' instead of ``task,'' to clarify that there is only one predictive ``task,'' e.g., we are always trying to predict a symptom score. The data are divided into different patients from which we have observations relevant to performing the task. 

\newcommand{\Nrm}{\ensuremath{\boldsymbol{\nu}}}
\newcommand{\nrm}{\ensuremath{\nu}}
The \textit{Composite Absolute Penalties} (CAP) family of models, of which the lasso is a special case, are used in various cases of hierarchical and group-based feature selection.  Intuitively, CAP penalties work by taking the norms of vectors that contain coefficients of different groups of variables, and then penalizing the norm of the vector containing each of the group norms. Construction of a general CAP works as follows.
For each of $\kappa$ groups of coefficients, we create sub-vectors of coefficients denoted $\bbeta^k$. We take the norms $\nrm_k = \|\bbeta^k\|_{\gamma_k}$ of each of these sub-vectors, and place them in a $\kappa$-dimensional vector $\Nrm = (\nrm_1,...,\nrm_\kappa)$. The CAP penalty is given by $\|\Nrm\|_{\gamma_0}^{\gamma_0} = \sum\limits_k |\nrm_k|^{\gamma_0}$. Using a least-squares loss, the CAP estimate as a function of $\lambda$ is given by
$\hat{\bbeta}^{\text{CAP}}(\lambda, \boldsymbol{\gamma}) =  \argmin_{\bbeta} \tfrac{1}{2} \| y - X\bbeta \|^2_2  + \lambda \|\Nrm\|_{\gamma_0}^{\gamma_0}$.
To induce sparsity across groups, we would select $\gamma_0 = 1$.
CAPs are convex provided that the norms used are also convex, making global optimization feasible \citep{zhao2009}.

In order to leverage data collected from different (but similar) tasks (we would say patients), \citet{jalali2010} developed a CAP-based method used for what they term \textit{dirty data}: data containing features that are not relevant to all tasks. 
They outlined a \textit{dirty model} that can leverage similarity between tasks by identifying features that are globally relevant, but that can also allow for the inclusion of features that are only relevant in some tasks. 
This corresponds to the situation in which most patients may be well-served by the same predictive model, but some patients require different models. Let $n_k$ and $p$ denote the number of examples and features, respectively, {\em per patient.} There are $\kappa$ patients. The dirty model is parameterized by a ``global'' matrix of parameters $B$ and a ``local'' matrix of parameters $S$, both of which are $p \times \kappa$. 
The feature matrix for patient $k$ is denoted by $X^k$, and the targets are denoted $\y^k$. 
The optimization problem for the dirty model is\footnote{Here we give the objective function as defined in the code provided by \citeauthor{jalali2010}. It differs from the objective stated in their paper by a factor of $\tfrac{1}{2n}$ applied to the squared loss term.}
\begin{equation}
\argmin_{B,S}\! \sum_{k=1}^\kappa  \|\y^{k}-X^{k}(B_k+S_k)\|_2^2 +\lambda_B\|B\|_{1,\infty} + \lambda_S\|S\|_{1,1}. \label{eq:dirtymodel}
\end{equation}
The learned parameters for patient $k$ are then given by $\bbeta^k = \hat{B}_k + \hat{S}_k$, where $\hat{B}$ and $\hat{S}$ are solutions of (\ref{eq:dirtymodel}). The dirty model applies an $\ell_{1,\infty}$ norm penalty to parameter matrix $B$, which is given by 
$\|B\|_{1,\infty} = \sum_j \|(B^\T)_j\|_{\infty}
$.
 Hence, this is a CAP problem that uses the $\ell_1$ and $\ell_\infty$ norms to achieve group sparsity.
The effect of this penalty is to induce entire rows of $B$ to enter the model at the same time, that is, as soon as one element of $B$ enters the model, all elements corresponding to the same feature in different patients may enter the model with no additional penalty as long as their absolute value stays less than or equal to that of the largest element. 
Thus parameters in $B$ will ``turn on'' for all patients at once. Note however that there is no strict enforcement of equality across the rows of $B$, so while the selection is global, the actual parameter values are not. 
The secondary parameter matrix $S$ is penalized using the $\ell_{1,1}$ norm, which is simply the sum of the absolute values of the elements in the matrix; this induces element-wise sparsity in $\hat{S}$ and allows individual patients to ``turn on'' additional features if necessary. 
A schematic view of a hypothetical learned $\hat B$ and $\hat S$ is shown in Figure \ref{sampleDM}.
\begin{figure}
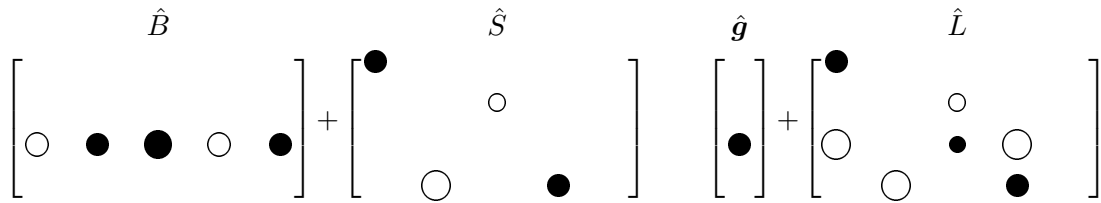
\centering
\begin{subfigure}[t]{0.57\textwidth}
\[
\renewcommand\BAextrarowheight{2pt}
\begin{blockarray}{*{5}{c}}
& & \hat B & & \\
\begin{block}{[*{5}{c}]}
 \phantom{\BigCircle} & \phantom{\BigCircle}& \phantom{\BigCircle}& \phantom{\BigCircle}  \\
 &  &  &  & \\
\mbox{\Circle} & \mbox{\FilledCircle} & \mbox{\FilledBigCircle} & \mbox{\Circle} & \mbox{\FilledCircle} \\
 &  &  &  & \\
\end{block}
\end{blockarray}
+
\renewcommand\BAextrarowheight{2pt}
\begin{blockarray}{*{5}{c}}
& & \hat S & & \\
\begin{block}{[*{5}{c}]}
\mbox{\FilledCircle} &  &  &  & \\
&  & \mbox{\SmallCircle} &  & \\
\phantom{\BigCircle}& \phantom{\BigCircle}& \phantom{\BigCircle}& \phantom{\BigCircle}& \phantom{\BigCircle}\\
& \mbox{\BigCircle} &  & \mbox{\FilledCircle} & \\
\end{block}
\end{blockarray}
\]
\vspace{-2em}
\caption{Example estimated dirty model structure with global parameters $\hat{B}$ and local parameters $\hat{S}$.}
\label{sampleDM}
\end{subfigure}%
~~~~
\begin{subfigure}[t]{0.36\textwidth}
\[
\renewcommand\BAextrarowheight{2pt}
\begin{blockarray}{*{1}{c}}
\hat\g \\
\begin{block}{[*{1}{c}]}
 \\
  \\
\mbox{\FilledCircle}\\
 \\
\end{block}
\end{blockarray}
+
\renewcommand\BAextrarowheight{2pt}
\begin{blockarray}{*{5}{c}}
& & \hat L & & \\
\begin{block}{[*{5}{c}]}
\mbox{\FilledCircle} & \phantom{\BigCircle}& \phantom{\BigCircle}& \phantom{\BigCircle}& \phantom{\BigCircle} \\
\phantom{\BigCircle} &  & \mbox{\SmallCircle} &  & \\
\mbox{\BigCircle} && \mbox{\FilledSmallCircle} &  \mbox{\BigCircle}  & \\
& \mbox{\BigCircle} &  & \mbox{\FilledCircle} & \phantom{\BigCircle}\\
\end{block}
\end{blockarray}
\]
\vspace{-2em}
\caption{Example estimated gLOP model structure with global parameters $\hat{g}$ and local parameters $\hat{L}$.}
\label{samplegLOP}
\end{subfigure}
\caption{Schematic view of a dirty and a gLOP model that would make identical predictions. Columns of $\hat B$, $\hat S$, and $\hat L$ represent coefficients for each patient. Filled circles represent positive coefficients, empty circles represent negative. Sizes indicate magnitude. }
\end{figure}
Representing the parameters of the model as the matrices $B$ and $S$ allows the dirty model to capture some similarities between patients while still allowing for individual variation. 
However, the interpretation of the model is not straightforward because coefficients for a single feature in $B$, although considered the ``global'' part of the model, are not required to be the same for all patients. 
Furthermore, although the dirty model penalty is convex, it does not admit a straightforward regularization path algorithm.

\section{gLOP}
We now introduce our global and LOcal Penalty (gLOP) model. As for the ``dirty model,'' we learn global and local parameters, but our decomposition uses a \emph{vector} $\g$ (for ``global'') and a matrix $L$ (for ``local''). The $p\times 1$ vector $\g$ contains global coefficients that apply to all patients, and the columns of the $p\times \kappa$ matrix $L$ 
contain local coefficients that apply only to their specific patients. 
This makes our model more easily interpretable, as the global effects are clearly distinguishable from individual effects and are enforced to be {\em the same across all patients.} Thus we apply a simpler $\ell_1$ penalty to $\g$ instead of the $\ell_{1,\infty}$ norm used in the dirty model\ because we do not need to use the penalty to ``push'' the global parameters to be the same across patients. The gLOP optimization problem given by
\begin{equation} \label{eq:gLOP}
\argmin_{\g,L}  \sum_{k=1}^\kappa \tfrac{1}{2n_k} \|\y^{k}-X^{k}(\g + L_k)\|_2^2 + \lambda_{\g} \|\g\|_1 + \lambda_L\|L\|_{1,1}.
\end{equation}
The learned parameters for patient $k$ are then given by $\bbeta^k = \g + \hat{L}_k$, where $\hat{\g}$ and $\hat{L}$ are given by (\ref{eq:gLOP}).
A diagram of an example $\hat \g$ and $\hat L$ is shown in Figure \ref{samplegLOP}.

Simplifying the matrix $B$ into our vector $\g$  is advantageous as it reduces the number of parameters, which reduces the potential for overfitting compared to the dirty model. Additionally, because the global coefficients are identical across patients, the model is easily interpretable for all patients together and individual patients, increasing the utility of the model in scientific practice. This model formulation also offers computational advantages; we present a space-efficient block coordinate minimization with the lasso as a subroutine, and we present a method that allows us to compute the full regularization path, which is not possible with the dirty model. Finally, the gLOP formulation allows us to establish deterministic conditions for uniqueness in terms of the data and the penalty parameters.

\subsection{Block Coordinate Minimization}
Problem (\ref{eq:gLOP}) can be optimized using block coordinate minimization \citep{wright1999numerical} using the standard lasso. A significant advantage of this method is its use of the lasso as a subroutine, for which fast implementations are commonly available. To solve (\ref{eq:gLOP}) using the lasso, we decompose the optimization into separate problems for $L$ and $\g$. If we fix $L$, the $\g$ that optimizes (\ref{eq:gLOP}) is given by
\begin{equation}
\argmin_{\g}  \tfrac{1}{2n_k}  \|\tilde{\y} - \tilde{X} \g\|_2^2 + \lambda_{\g}\|\g\|_1,~
\mathrm{where}
~\tilde{X} =
\left[ \begin{array}{cc} 
X^{1}\\
X^{2}\\
\vdots \\
X^{\kappa}\\
\end{array} \right]
\tilde{y} =
\left[ \begin{array}{cc} 
\y^{1} - X^{1} L_1\\
\y^{2} - X^{2} L_2\\
\vdots \\
\y^{\kappa} - X^{\kappa} L_{\kappa}\\
\end{array} \right].
\label{eq:solveforg}
\end{equation}
Problem (\ref{eq:solveforg}) is a standard lasso problem. If we fix $\g$, the $L$ that optimizes (\ref{eq:gLOP}) is given by
\begin{align*}
& \argmin_{L}  \sum_{k=1}^\kappa \left[\tfrac{1}{2n_k} \|(\y^{k}-X^{k}\g) -X^{k}L_k\|_2^2 +  \lambda_L\|L_k\|_1\right]. \numberthis \label{eq:solveforL}
\end{align*}
Note that each term in the sum in (\ref{eq:solveforL}) involves only one column of $L$. Therefore we can optimize each column of $L$ independently:
\begin{equation}
\argmin_{L^k}  \tfrac{1}{2n_k} \|\check{\y}^{k} - X^{k}L_k\|_2^2 +  \lambda_L\|L_k\|_1 \label{eq:solveforLk}
\end{equation}
where $X^{k}$ is the design matrix for patient $k$, $L_k$ is the column of $L$ for patient $k$ and $\check{\y}^{k} = \y^{k} - X^{k}\g$, or $\y$ adjusted for the contribution of $\g$, where $\y^{k}$ is the vector of observations for patient $k$.
Note that we are using squared loss in all of the above equations in our development, but any strictly convex loss function can be substituted, meaning that our approach applies to other generalized linear models as well. We can solve (\ref{eq:gLOP}) by choosing a starting point and alternating solving problems (\ref{eq:solveforLk}) and (\ref{eq:solveforg}) until some convergence criterion is met. Note that since (\ref{eq:gLOP}) is convex, this procedure will converge to a global optimum, although that optimum may not be unique as we discuss below.


\subsection{A Single-Lasso View of gLOP}
\newcommand{\altparam}{\ensuremath{\boldsymbol{\xi}}}
\newcommand{\altparamscl}{\ensuremath{\xi}}
The block coordinate minimization algorithm presented above works very well and is space-efficient, taking $\mathcal{O}(\sum_k n_k p)$ space, which allows gLOP to be applied to large datasets. However it only produces the gLOP estimate for a single pair of $\lambda_{\g}$ and $\lambda_L$. We now present an alternative formulation that allows us to recover the entire regularization path for gLOP, and also allows us to characterize the uniqueness of gLOP estimates. For simplicity, we assume the $n_k$ are all equal. We define a $n \cdot \kappa\times p \cdot(\kappa+1)$ block matrix with the first $p$ columns containing the vertical concatenation of the design matrices for each patient, horizontally concatenated with a block matrix with design matrices for each patient on the diagonal. We then define target and coefficient vectors $\y$ and $\bbeta$ as follows\footnote{If the $n_k$ are not all equal, we need additional normalization in the definitions of $X$, $\y$, and $\bbeta$; we omit these details for clarity}:
\[ 
X = 
\begin{bmatrix}
X^{1} & X^{1} & 0 & \cdots & 0\\
X^{2} & 0 & X^{2} & \cdots & 0\\
\vdots & \vdots & \vdots & \ddots & \vdots\\
X^{\kappa} & 0 & 0 & \cdots & X^{\kappa}\\
\end{bmatrix},~~
\y =
\left[ \begin{array}{cc} 
\y^{1}\\
\y^{2}\\
\vdots \\
\y^{\kappa}\\
\end{array} \right],~~
\bbeta =
\left[ \begin{array}{cc} 
\g\\
L_{1}\\
L_{2}\\
\vdots \\
L_{\kappa}\\
\end{array} \right]
\]
We can then write the gLOP optimization as
\begin{equation}
\argmin_{\bbeta} \tfrac{1}{2n}\|\y - X\bbeta\|^2_2 + \sum_{i = 1}^{p \cdot (\kappa+1)} \lambda_i|\beta_i|\label{eq:weightedlasso}
\end{equation}
where we set $\lambda_i = \lambda_{\g}$ for $1 \le i \le p$ and $\lambda_i = \lambda_L$ for $p+1 \le i \le p\cdot(\kappa+1)$.
Equation (\ref{eq:weightedlasso}) is a lasso-like problem but with different regularization weights applied to different coefficients. Note that if we set $\altparamscl_i = \lambda_i \beta_i$, we may re-write the problem equivalently as
\begin{equation}
\argmin_{\altparam} \tfrac{1}{2n}\|\y - \bar{X}\altparam\|^2_2 + \|\altparam\|_1\label{eq:lasso}
, ~\mathrm{where}~~
\bar{X} = 
\begin{bmatrix}
\frac{X^{1}}{\lambda_{\g}} & \frac{X^{1}}{\lambda_L} & 0 & \cdots & 0 \\[0.4em]
\frac{X^{2}}{\lambda_{\g}} & 0 & \frac{X^{2}}{\lambda_L} & \cdots & 0\\
\vdots & \vdots & \vdots & \ddots & \vdots\\
\frac{X^{\kappa}}{\lambda_{\g}} & 0 & 0 & \cdots & \frac{X^{\kappa}}{\lambda_L}\\
\end{bmatrix}.
\end{equation}
by absorbing the penalty parameters into the design matrix. This is a standard lasso problem with $\lambda = 1$, which we can solve and then recover the original estimates by defining $\hat{\beta}_i = \hat{\altparamscl}_i / \lambda_i$ for $\lambda_i > 0$.  This formulation gives us the ability to use properties of the lasso to develop optimization algorithms for and determine properties of the gLOP estimator. In the following sections, we develop an algorithm for the full regularization path of gLOP, and we give a complete characterization of the uniqueness of gLOP estimates.

\subsection{The Full Regularization Path of gLOP}
\textit{Least-Angle Regression} (LARS) \citep{efron2004} was originally formulated as a non-greedy forward variable selection algorithm for least-squares regression. With a slight modification, the LARS algorithm produces the entire regularization path for the lasso; that is, it produces the lasso solutions for a given problem for all $\lambda \ge 0$ by expressing the solution path as a piecewise linear vector-valued function. In the following, we use ``LARS'' to mean the commonly-used lasso version of the LARS algorithm.

Using the formulation given in (\ref{eq:lasso}), we can apply the LARS algorithm to obtain a solution path for gLOP as we now show. By fixing ``reference'' values $\lambda^*_{\g}$ and $\lambda^*_L$, we apply LARS to the problem defined in (\ref{eq:lasso}), which under certain conditions (discussed further below) gives all solutions of the form
$
\hat\altparam(\lambda) = \argmin_{\altparam} \tfrac{1}{2n}\|\y - \bar{X}\altparam\|^2_2 + \lambda\|\altparam\|_1\label{eq:gLOPLARS}
$
where $\hat\altparam(\lambda)$ gives the gLOP solution for $\lambda_{\g} = \lambda\cdot\lambda^*_{\g}$, and $\lambda_L = \lambda\cdot\lambda^*_L$. Thus we obtain all solutions corresponding to a fixed ratio $\lambda_{\g}/\lambda_L$, but whose ``overall'' amount of penalization varies according to $\lambda$. This may be interpreted as fixing the amount of preference for a more global versus a more local model, and varying the total number of parameters that are permitted to enter the model. The corresponding $\hat\bbeta$ parameter is given by $\hat\beta_i(\lambda) = \hat\altparamscl_i(\lambda) / \lambda^*_{\g}$ for $i \le p$ (the global parameters) and $\hat\beta_i(\lambda) = \hat\altparamscl_i(\lambda) / \lambda^*_{L}$ for $i > p$ (the local parameters.)

\subsection{Tuning Parameters and Uniqueness}
\renewcommand{\Re}{\mathbb{R}}
\newcommand{\Aset}{\ensuremath{\mathcal{A}}}
\newcommand{\Coe}{\ensuremath{\boldsymbol\xi}}
\newcommand{\coe}{\ensuremath{\xi}}
\newcommand{\Des}{\bar{X}}
\newcommand{\tgt}{\ensuremath{\y}}
\newcommand{\Subg}{\ensuremath{\boldsymbol\alpha}}
\newcommand{\subg}{\ensuremath{\alpha}}
\newcommand{\DesAset}[1]{\ensuremath{\Des_{\Aset,#1}}}
\newcommand{\subgAset}[1]{\ensuremath{\subg_{\Aset,#1}}}
We now characterize the uniqueness of gLOP estimates in terms of the data and the tuning parameters. Proofs of the lemmas are given in Appendix A.
We begin by extending work by Ryan Tibshirani
(\citeyear{tibshirani2013}) to the case where not all lasso
coefficients are penalized equally by using our modified design matrix $\Des$.
Our objective in (\ref{eq:gLOPLARS}) has
subgradients evaluated at a point $(\Coe,\lambda)$ given by

\begin{equation}
-\Des^\T(\tgt - \Des\Coe) + \lambda \Subg, \mbox{~~s.t.~~} \subg_i \in 
\begin{cases}
\{ \sgn(\coe_i) \} &\mbox{if } \coe_i \ne 0\\\
[ -1, 1] &\mbox{if } \coe_i = 0. 
\end{cases}
\label{eq:subgradset}
\end{equation}

First-order optimality conditions state that if a point $\hat\Coe$ is optimal, then there exist $\hat\subg_i$ satisfying constraints in (\ref{eq:subgradset}) for which the given expression is the zero vector. Because our objective is convex (but not strictly convex) these conditions are also sufficient for optimality, though there is no guarantee that the optimum is unique. For an optimal point $\hat\Coe$ with corresponding $\hat\Subg$ satisfying first-order optimality conditions, define
$
\Aset = \{i \in \{1,...,p\} : |\hat\subg_i| = 1 \}
$
where $\Des_i$ is the $i$th column of $\Des$. This set contains all the indices of the non-zero coefficients, plus possibly indices of some of the zero coefficients. Let $\Des_\Aset$ be the sub-matrix of $\Des$ consisting of only the
columns with indices in $\Aset$. 

\begin{lemma}\label{lm:fullrank}
If $\Des_\Aset$ has full rank then the lasso has a unique solution.
\end{lemma}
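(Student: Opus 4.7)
My plan is to follow the standard Tibshirani (2013) argument, adapted to the weighted design matrix $\bar X$. Let $\hat\Coe$ and $\tilde\Coe$ be two optima of the lasso problem (\ref{eq:lasso}). The first step is to note that the optimum value of the objective is a well-defined constant $c^*$, and that the squared-loss term is strictly convex as a function of the fitted vector $\Des\Coe$, while the $\ell_1$ norm is convex. If $\Des\hat\Coe \ne \Des\tilde\Coe$, then the midpoint $\tfrac12(\hat\Coe + \tilde\Coe)$ would have strictly smaller squared loss and at most the average $\ell_1$ penalty, contradicting optimality. Hence $\Des\hat\Coe = \Des\tilde\Coe$, and from equal objective values we further conclude $\|\hat\Coe\|_1 = \|\tilde\Coe\|_1$.

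Next I would use the first-order condition (\ref{eq:subgradset}). Since $\Des\Coe$ is the same at every optimum, the residual $\tgt - \Des\Coe$ is the same, and rearranging (\ref{eq:subgradset}) forces $\lambda \Subg = \Des^\T(\tgt - \Des\Coe)$, so the subgradient $\hat\Subg$ is actually identical at every optimum. In particular, the equicorrelation set $\Aset = \{ i : |\hat\subg_i| = 1\}$ is common to all optima. Moreover, for any optimum $\Coe$ and any $i \notin \Aset$ we have $|\hat\subg_i| < 1$, and the subgradient constraint in (\ref{eq:subgradset}) then forces $\coe_i = 0$. Therefore every optimum is supported on $\Aset$.

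Finally, applying the identity $\Des\hat\Coe = \Des\tilde\Coe$ and restricting to the common support $\Aset$ gives
\[
\Des_\Aset\bigl(\hat\Coe_\Aset - \tilde\Coe_\Aset\bigr) = 0.
\]
Under the assumption that $\Des_\Aset$ has full (column) rank, its null space is trivial, so $\hat\Coe_\Aset = \tilde\Coe_\Aset$; combined with the fact that both vectors vanish off $\Aset$, this gives $\hat\Coe = \tilde\Coe$, proving uniqueness.

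The main obstacle I expect is the argument that the subgradient $\hat\Subg$ is the same at every optimum, which is what lets one conclude both that $\Aset$ is well-defined and that optima are supported on $\Aset$. Everything else (strict convexity of the loss in the fit, and the rank argument on $\Des_\Aset$) is straightforward; the only subtlety introduced by the reweighting is that each column of $\Des$ is a rescaled column of the original block matrix, but since the $\ell_1$ penalty on $\altparam$ is uniform this does not change the structure of the KKT conditions, so Tibshirani's argument carries over verbatim.
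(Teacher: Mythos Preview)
Your argument is correct and follows essentially the same Tibshirani-style route as the paper: establish that the fit and hence the KKT subgradient $\hat\Subg$ (and thus $\Aset$) are common to all optima, observe that every optimum is supported on $\Aset$, and conclude from $\nullspace(\Des_\Aset)=\{0\}$. The only cosmetic difference is that the paper derives an explicit pseudoinverse formula $\hat\Coe_\Aset = \Des_\Aset^+(\tgt - (\Des_\Aset^\T)^+\lambda\hat\Subg_\Aset) + \mathbf{z}$ with $\mathbf{z}\in\nullspace(\Des_\Aset)$ and cites \citet{tibshirani2013} for uniqueness of the fit, whereas you prove fit-uniqueness directly via strict convexity and bypass the pseudoinverse algebra; your version is therefore slightly more self-contained.
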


Thus, one way of ensuring uniqueness of $\hat\Coe$ is to guarantee that $\Des_\Aset$ is of full column rank. However, we do not know which columns of $\Des$ will be in $\Des_\Aset$ until we solve the optimization problem. We could ensure full rank of $\Des_\Aset$ by guaranteeing that $\Des$ is of full rank, but we do not wish to restrict our attention to full-rank design matrices; indeed one of the most useful patients for lasso-type methods is when $p > n$ and $\Des$ obviously does not have full rank. Furthermore, for gLOP, the matrix $\Des$ will {\em never} be full-rank. For example, suppose $p = 4$ and $\kappa = 3$; then by definition we have $\Des_1 = \Des_5 + \Des_9 + \Des_{13}$, establishing that $\Des$ has linear dependence among its columns and is not full-rank.

We now show that a weaker condition -- {\it affine independence with negation} -- among columns of $\Des$ is sufficient to ensure that $\Des_\Aset$ is always of full rank. 

\begin{definition}[Affinely Independent with Negation (AIN)] The columns of an $n \times p$ matrix $X$ are {\em Affinely Independent with Negation} (AIN) if there {\em do not exist} signs $s_i$, weights $w_i$, and an index $j$ such that
$X_j = \sum_{\stackrel{i=1}{i\ne j}}^p w_i s_i X_i
$,
where $s_i \in \{-1,1\}$, and $\sum_i w_i = 1$. (Standard affine independence is similar but does not allow for the $s_i$.)
\end{definition}

\begin{lemma}\label{lm:AIN}
If the columns of $\Des$ are AIN, then $\Des_\Aset$ has full rank.
\end{lemma}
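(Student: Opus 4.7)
The plan is to prove the contrapositive: if $\Des_\Aset$ is rank-deficient, then the columns of $\Des$ are not AIN. Assume $\sum_{i \in \Aset} c_i \Des_i = 0$ for scalars $c_i$ not all zero; the goal is to turn this linear dependence into an AIN-violating combination for a specific column $\Des_j$. The lever is the equicorrelation structure of the active set: by the definition of $\Aset$ and the optimality conditions in (\ref{eq:subgradset}), for each $i \in \Aset$ there is a sign $s_i^\ast \in \{-1,1\}$ with $\Des_i^\T(\tgt - \Des\hat\Coe) = \lambda s_i^\ast$. Dotting the linear-dependence relation against the residual $r = \tgt - \Des\hat\Coe$ gives $\lambda \sum_{i \in \Aset} c_i s_i^\ast = 0$, so for $\lambda > 0$ we obtain the scalar identity $\sum_{i \in \Aset} c_i s_i^\ast = 0$. (The $\lambda = 0$ case reduces to an unpenalized least-squares problem and can be treated separately.)

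Next, pick any $j \in \Aset$ with $c_j \ne 0$ and write $\Des_j = \sum_{i \ne j} a_i \Des_i$ for $a_i = -c_i/c_j$. Substituting into the scalar identity yields $\sum_{i \ne j} a_i s_i^\ast = s_j^\ast$. Define $s_i = s_j^\ast s_i^\ast \in \{-1,1\}$ and $w_i = s_i a_i$; then $\sum_{i \ne j} w_i = s_j^\ast \sum_{i \ne j} a_i s_i^\ast = (s_j^\ast)^2 = 1$, while $\sum_{i \ne j} w_i s_i \Des_i = \sum_{i \ne j} a_i \Des_i = \Des_j$. This is exactly an AIN-type dependence for column $j$, contradicting the hypothesis on $\Des$, so $\Des_\Aset$ must in fact be of full column rank.

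The main obstacle I anticipate is bookkeeping the signs correctly: the equicorrelation identity delivers a linear relation among the $s_i^\ast$, and the challenge is to rearrange it so the resulting weights $w_i$ on the signed columns sum to exactly $1$, not merely to some nonzero constant. Once the substitution $w_i = s_j^\ast s_i^\ast a_i$ is identified, as above, the verification is immediate. A secondary concern is whether the ``weights'' in the AIN definition are allowed to be negative; the parenthetical comparison to standard affine independence suggests they are, and with that reading the argument closes without modification. If instead nonnegativity were required, one would need to first rescale the null vector so that $|c_j| = \sum_{i \ne j}|c_i|$ for the chosen index before extracting signs.
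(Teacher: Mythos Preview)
Your proof is correct and follows essentially the same route as the paper's: both argue by contrapositive, take a linear dependence among the active columns, dot against the residual to exploit the equicorrelation condition $\Des_i^\T(\tgt - \Des\hat\Coe) = \lambda\hat\alpha_i$ with $\hat\alpha_i \in \{-1,1\}$, and then rearrange to exhibit signs $s_i = \hat\alpha_j\hat\alpha_i$ and weights summing to $1$. Your side remarks on $\lambda = 0$ and on the sign of the $w_i$ are sound but not needed, since the AIN definition as stated places no sign constraint on the weights.
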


The design matrix $\Des$ is defined in terms of the ``original'' gLOP matrix $X$ together with $\lambda_{\g}$ and $\lambda_L$. Each of these components influences whether or not the columns of $\Des$ will be AIN; we summarize this in the following theorem.
\begin{theorem}\label{thm:unique}
If the the columns of each matrix $X^{k}$, $k = 1...\kappa$ are AIN, and if $\lambda_L > \lambda_{\g}$, then the columns of $\Des$ are AIN and there is a unique gLOP solution.
\end{theorem}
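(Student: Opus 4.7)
The plan is to prove the contrapositive by contradiction. Suppose the columns of $\bar X$ fail to be AIN, so there exist an index $j^*$, nonnegative weights $w_m$ with $\sum_m w_m = 1$, and signs $s_m \in \{\pm 1\}$ such that $\bar X_{j^*} = \sum_{m \ne j^*} w_m s_m \bar X_m$. Setting $c_m = w_m s_m$, this is equivalent to the existence of $\gamma \in \ker \bar X$ with $\gamma_{j^*} = -1$, $\gamma_m = c_m$ for $m \ne j^*$, and $|\gamma_{j^*}| = 1 = \tfrac{1}{2}\|\gamma\|_1$. Partition $\gamma = (\gamma^g; \gamma^{l,1}; \ldots; \gamma^{l,\kappa})$ to match the block columns of $\bar X$. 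By the block-row structure of $\bar X$, the identity $\bar X \gamma = 0$ decouples into $X^k \delta^k = 0$ for every patient $k$, where $\delta^k := \gamma^g/\lambda_{\g} + \gamma^{l,k}/\lambda_L$.

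The first step is to dispose of the degenerate possibility that $\delta^k = 0$ for all $k$. This forces $\gamma^{l,k} = -(\lambda_L/\lambda_{\g})\gamma^g$ for every $k$, giving $\|\gamma\|_1 = (1 + \kappa\lambda_L/\lambda_{\g})\|\gamma^g\|_1$ and $\|\gamma\|_\infty = (\lambda_L/\lambda_{\g})\|\gamma^g\|_\infty$, so the balance $\|\gamma\|_\infty = \tfrac{1}{2}\|\gamma\|_1$ reduces to $\|\gamma^g\|_\infty/\|\gamma^g\|_1 = (\lambda_{\g} + \kappa\lambda_L)/(2\lambda_L)$, whose right side exceeds $1$ when $\kappa \ge 2$ while the left lies in $[0,1]$. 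Hence some $\delta^k$ is a nonzero element of $\ker X^k$, a candidate AIN violation.

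I would then case-split on whether $j^*$ indexes a global column (Case A: $j^* = j_0 \in \{1, \ldots, p\}$, $|\gamma^g_{j_0}| = 1$) or a local column (Case B: $j^* = (j_0, k_0)$, $|\gamma^{l,k_0}_{j_0}| = 1$). In Case A, restricting $\bar X \gamma = 0$ to block $k$ and isolating the $X^k_{j_0}$-term produces $\mu_k X^k_{j_0} = \sum_{i \ne j_0} d_{i,k} X^k_i$, with $\mu_k = 1 - (\lambda_{\g}/\lambda_L)\gamma^{l,k}_{j_0}$ and $d_{i,k} = \gamma^g_i + (\lambda_{\g}/\lambda_L)\gamma^{l,k}_i$; the coefficient $\mu_k$ is strictly positive because $\lambda_{\g}/\lambda_L < 1$ and $|\gamma^{l,k}_{j_0}| \le 1$, so this writes $X^k_{j_0}$ as a signed linear combination of the other columns of $X^k$. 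A triangle-inequality bound combined with the total-weight identity $\sum_{i \ne j_0}|\gamma^g_i| + \sum_{i,k}|\gamma^{l,k}_i| = 1$ yields $\sum_{i \ne j_0}|d_{i,k}| \le \mu_k$, with equality attainable only when every local block has zero weight. The degenerate equality case collapses the combination to a purely global one which directly violates AIN of any $X^k$. Case B mirrors Case A with $\lambda_{\g}$ and $\lambda_L$ swapped; here the analogous coefficient $\nu_k = 1 - (\lambda_L/\lambda_{\g})\gamma^g_{j_0}$ can vanish, and that sub-subcase is handled separately via the block-$k'$ equations for $k' \ne k_0$, which themselves yield linear relations in the $X^{k'}$'s forcing an AIN violation under the total-weight budget.

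The main obstacle I anticipate is converting the weak bound $\sum_{i \ne j_0}|d_{i,k}| \le \mu_k$ into the exact equality that AIN failure of $X^k$ demands. I expect to close this gap through a global accounting argument: the total triangle-inequality slack summed over $k$ is controlled by $(1 - \lambda_{\g}/\lambda_L)\sum_k \sum_i |\gamma^{l,k}_i|$, and preserving the tight balance $\|\gamma\|_\infty = \tfrac{1}{2}\|\gamma\|_1$ of $\gamma$ forces this aggregate slack to vanish. Consequently the sign-alignment conditions for equality in the triangle inequality must hold at some patient $k^*$, at which point $\mu_{k^*} X^{k^*}_{j_0} = \sum_{i \ne j_0} d_{i,k^*} X^{k^*}_i$ becomes an exact AIN violation of $X^{k^*}$, yielding the desired contradiction with the hypothesis that every $X^k$ is AIN.
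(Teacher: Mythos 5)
Your reduction of AIN failure of $\Des$ to a nonzero $\gamma\in\ker\Des$ with a coordinate of magnitude $\tfrac12\|\gamma\|_1$, the per-block decoupling $X^k(\gamma^g/\lambda_{\g}+\gamma^{l,k}/\lambda_L)=0$, the degenerate-case computation (for $\kappa\ge 2$), and the positivity of $\mu_k$ in Case~A are all fine. The genuine gap is exactly the step you flag: upgrading $\sum_{i\ne j_0}|d_{i,k}|\le\mu_k$ to the equality that an AIN violation of some $X^{k}$ requires, and the proposed ``global accounting'' cannot deliver it. The reason is that AIN of $X^k$ forbids only null vectors whose largest entry carries half the $\ell_1$ mass; it does not forbid null vectors altogether, and a non-witness null vector in one block can be compensated through the global columns and the other blocks' local columns so that \emph{every} block inequality stays strict while $\Des$ still acquires a signed affine dependence with weights summing to one. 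Concretely, take $\kappa=2$, $p=2$, $\lambda_{\g}=1$, $\lambda_L=2$, $X^1$ of full column rank, and $X^2$ with $X^2_1=-\tfrac12X^2_2$ (this $X^2$ is AIN). Writing stacked columns as pairs, one checks $(0;\,X^2_1/\lambda_L)=\tfrac14\,(X^1_1;\,X^2_1)/\lambda_{\g}-\tfrac12\,(X^1_1;\,0)/\lambda_L-\tfrac14\,(0;\,X^2_2)/\lambda_L$, a combination of other columns of $\Des$ with nonnegative weights $\tfrac14+\tfrac12+\tfrac14=1$; yet here $\delta^1=0$ and the only block-level relation is $X^2_1=-\tfrac12X^2_2$, whose weight is $\tfrac12\ne1$, so the aggregate slack you hope to annihilate equals $\tfrac12$ and nothing in the balance condition on $\gamma$ forces it to vanish. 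So the contradiction never materializes and the contrapositive argument cannot be completed along these lines.

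This is also where your route diverges from the paper's. The paper does not attempt an exhaustive kernel analysis: after invoking orthogonality across patient blocks and AIN within each block, it examines only the structural dependences in which a global column is written as $(\lambda_L/\lambda_{\g})$ times the sum of its $\kappa$ local copies (with possible sign flips), and verifies that the resulting weight sums $(\kappa-2k)\lambda_L/\lambda_{\g}$ can never equal $1$ when $\lambda_L>\lambda_{\g}$. In effect that argument lives in the regime where all the block residuals $\delta^k$ vanish --- precisely the regime in which your own degenerate-case computation already finishes the proof for $\kappa\ge2$. The mixed configurations above, where some $\delta^k$ is a nonzero non-witness element of $\ker X^k$, are what your more systematic decomposition exposes and what your accounting step does not cover; to make your plan close, you would need a stronger per-block hypothesis (for instance each $X^k$ of full column rank, which forces every $\delta^k=0$), at which point your first case alone completes the argument.
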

\begin{proof}
First, we note that columns from different patient blocks are orthogonal; thus we cannot construct columns from one patient block using weighted sums of columns from other patient blocks.
Furthermore, we assume that within each patient block we have AIN columns. (We will discuss this assumption more later.) Therefore, if there is affine dependence among columns, it must involve the patient blocks and the global block.

Each column in the global block of $\Des$ can be written as a linear combination of columns from the patient blocks. In particular, 
for $i \le p$, we have $X_i/\lambda_{\g} = \sum_{k=1}^\kappa \left( \frac{\lambda_L}{\lambda_{\g}} \right) X_{p\cdot{}k+i}/\lambda_L.
$
We could write this linear combination with weights $w_i = \frac{\lambda_L}{\lambda_{\g}}$ and signs $s_i = 1$, or we could also negate any number of the $s_i$ along with their corresponding $w_i$ to achieve the same result. If we negate $k$ of the columns, the sum of the weights is given by $(\kappa - k)\frac{\lambda_L}{\lambda_{\g}} - k ({\lambda_L}/{\lambda_{\g}})$.
Therefore to ensure the AIN property of $\Des$, we may choose $\lambda_{\g}$ and $\lambda_L$ such that
$(\kappa - k)({\lambda_L}/{\lambda_{\g}}) - k ({\lambda_L}/{\lambda_{\g}}) \ne 1$,
or equivalently
$\left(\kappa - 2k\right) \frac{\lambda_L}{\lambda_{\g}} \ne 1 $
for all $0 \le k \le \kappa$. We can ensure that this holds by noting that if we choose $\lambda_L > \lambda_{\g}$, then the inequality holds because $(\kappa - 2k)$ is an integer and $\frac{\lambda_L}{\lambda_{\g}} > 1$, so their product cannot possibly equal 1. Note that if $\kappa$ is even, then $\frac{\lambda_L}{\lambda_{\g}} > \tfrac{1}{2}$ is sufficient.
\end{proof}

Theorem \ref{thm:unique} characterizes uniqueness of gLOP in terms of the penalty parameters and the data matrices for each patient, $X^1$ through $X^k$. \citet{tibshirani2013} note that a design matrix drawn from a continuous probability distribution on $\Re^{np}$ has the AIN property with probability one \citep[see][Lemma 4]{tibshirani2013}. Thus, for matrices of continuous feature values, the uniqueness of gLOP can be assured by an appropriate choice of $\lambda_{\g}$ and $\lambda_L$ alone. Design matrices containing discrete entries require more careful analysis to ensure uniqueness. Another interesting consequence of this is that all of our lasso sub-problems for $\g$ and the $L_k$ that are used as part of our block coordinate minimization algorithms have unique solutions under the same conditions on the $X^k$, even if the joint minimization does not have a unique solution. Finally, we note that \citet{jalali2010} provide results addressing the uniqueness of the dirty model estimates asymptotically in $n$ with high probability rather than with probability one as we have here.

\subsection{Empirical Results: Synthetic Data}
\newcommand{\Truep}{\ensuremath{\boldsymbol{\theta}}}
\newcommand{\truep}{\ensuremath{\boldsymbol{\theta}}} 
\newcommand{\Trues}{\ensuremath{\boldsymbol{\tau}}}
\newcommand{\trues}{\ensuremath{\tau}}

To evaluate the {\em in-population} predictive accuracy of gLOP and the dirty model in different contexts, we conduct four main experiments using different sizes of $p$ and $\kappa$. By {\em in-population}, we mean accuracy on future data gathered on the same population of patients as were used for training. This is contrasted with {\em out-of-population} prediction, when a model is used to predict labels for instances of future (unseen) patients.
For each experiment, we run 100 trials using data generated as described above and average the results.

Means and standard deviations of the MSE from each experiment are shown in Table \ref{table:error}; for comparison, we also include the MSE of the result achieved by the standard lasso, which essentially ignores the distinction between data from different patients. The detailed experimental setup is given in Appendix B. In all cases, the test error for gLOP is statistically significantly lower than the error for the dirty model and for the lasso. In the small-$p$ settings, both gLOP and the dirty model do much better than the lasso because they allow different patients to have different coefficients, resulting in much better model fit. 
For the large-$p$ case, the error for gLOP decreases with $\kappa$, but the error for the dirty model remains the same. We attribute this to the increased number of parameters that the dirty model tries to learn; that is, $2\cdot \kappa \cdot p$ parameters for the dirty model versus $p + \kappa \cdot p$ for gLOP, even though they have the same representational power.

We also conducted a simple experiment to illustrate how gLOP can identify predictive outliers. Consider a case where we have data from several patients and we want to construct a global model, but 20 percent of patients are predictive outliers in the sense that they have very different local intercepts from the remainder of the patients. The true model is given by $Y = 1 + X + cZ + \epsilon$, where $X \sim \mathcal{N}(0,1)$, $Z \sim \text{Bernoulli}(0.2)$, $\epsilon ~ \mathcal{N}(0,1)$ and $c=10$.  In this case, we can use gLOP to construct a model {\em using only $X$ and $Y$} to identify predictive outliers by examining local coefficients. 
For our example, we generated synthetic data for $\kappa$ = 16 patients, $n_k$ = 10 observations per patient, and $p$ = 32 features.
Using only $X$ and $Y$, gLOP correctly identified the 5 outliers (for whom $Z = 1$) by assigning them larger non-zero local intercepts. Adding $Z$ into the model results in improved performance of the global model and no detection of predictive outliers. In a clinical setting, we envision that gLOP could be used to first identify predictive outliers, which would then direct the search for a new feature that could identify them and in turn improve the predictions of the global model.

\section{Empirical Results: Health Research Data}
We now present two examples of how gLOP can be used to identify predictive outliers and direct future data gathering to improve predictive performance.
The Sequenced Treatment Alternatives to Relieve Depression (STAR$^*$D) study was a multi-stage, multi-centre prospective randomized clinical trial assessing interventions for non-psychotic major depressive disorder (MDD) in the context of sequential alternatives for MDD treatment \citep{rush04sequenced}. In the study, proceeding to a new treatment was contingent on the clinician-rated Quick Inventory for Depressive Symptomatology (QIDS; \cite{Rush2003}; lower scores are better). If a patient did not improve according to this scale, a new treatment was initiated. According to the protocol, self-reported QIDS was also recorded at each clinic visit. We used gLOP to predict clinician-rated QIDS from self-report QIDS using data from 1,368 patients over 15,593 visits, using patient age as an additional demographic variable in $g$ only.
$\lambda$ values were chosen using the Bayesian Information Criterion (BIC) \citep{zou2007} as for some patients there were not enough observations to support cross-validation.

The global intercept and slope revealed that patients on average tended to rate their own symptoms lower than clinicians did. We found that 25 patients had a local intercept, 32 patients had a nonzero local slope coefficient, and 2 patients had both a local intercept and slope. Of the patients that had a local intercept, all but 3 had a positive coefficient indicating that this group of patients tended to underrate their symptoms even more than the general population relative to the clinician-rated scores. Of the patients that had a nonzero local slope, all but 4 had negative coefficients indicating that the more severe their symptoms at a given time, the more likely they are to rate themselves more severely in relation to the clinician QIDS scores. To improve the global model, we would suggest searching for features that might identify this minority of patients who would tend to rate their symptoms lower on average than their peers; such features might be identified using theory and expertise in the study of major depressive disorder.

The Oxford Parkinson's Disease Telemonitoring Dataset comprises a collection of speech signals collected from 42 people (5,875 observations in total) with early-stage Parkinson's Disease (PD) collected during a telemonitoring study to assess progression of PD symptoms remotely using speech characteristics \citep{tsanas2010accurate,tsanas2010enhanced}. 
Previous studies using these dysphonia measures have been able to both distinguish persons with PD from healthy subjects \citep{little2009}, and to predict PD symptom severity remotely using linear and non-linear regression techniques \citep{tsanas2010enhanced}. 
We used gLOP to predict the total Unified Parkinson's Disease Rating Scale (UPDRS) score based on waveform features, using BIC to choose penalization parameters.
We included a penalized local intercept for each patient, allowing us to capture variability in average PD symptom severity between patients and to interpret the remaining coefficients as the influence of each feature on the departure from a patient-specific mean severity.

We found that \textit{MDVP absolute jitter}, \textit{MDVP local shimmer (dB)}, \textit{eleven point amplitude perturbation quotient}, \textit{noise-to-harmonics ratio}, \textit{harmonics-to-noise ratio}, \textit{recurrence period density entropy}, \textit{detrended fluctuation analysis}, and \textit{pitch period entropy} were globally predictive; the inclusion of the latter four features is consistent with previous work on classification \citep{little2009}.
The coefficient matrix $L$ was mainly sparse, but all but one feature had one or more local coefficients. 
Of the 42 patients, 5 appear to be outliers, as the sum of the absolute values of their local coefficients were above the 90th percentile of all values calculated.
Based on this, it is possible that including additional features in the model could help to distinguish these patients and future similar patients from the bulk of patients for whom the global model predicts well. 

\section{Conclusion and Future Work}
The gLOP model lays conceptual and methodological groundwork for capturing predictive heterogeneity by identifying predictive outliers, which can help direct future data-gathering activities in cases where data collection may be difficult, invasive, or costly. We may in future want to impose additional constraints on $L$ to allow for only a small number of types of outliers that have the same local coefficients; this could be achieved by shrinking columns of $L$ toward each other, similar in concept to the fused lasso \citep{tibshirani05fused}. Using this approach, we could match new patients to one of a few local ``subtypes'' of patient in order to achieve better predictions. Another direction for future research is to incorporate post selection inference into gLOP as has been done with LARS \citep{taylor2014}, which would provide additional confidence information. Finally, we aim to incorporate gLOP into a visual exploratory data analytics system
 that will reveal predictive outliers and other kinds of hidden structure in datasets used for predictive modelling.



\bibliography{mlhc2016}

\clearpage
\appendix

\section*{Appendix A}

\label{ap:proofs}

\begin{proof}[Of Lemma \ref{lm:fullrank}]
The proof uses a strategy adapted from \citet{tibshirani2013} with modifications to accommodate different penalizations on different columns. Let $\hat\Coe_\Aset$ be the sub-vector
of $\hat\Coe$ containing only the elements with indices in
$\Aset$, and let $\hat\Subg_\Aset$ be the analogous sub-vector of
$\hat\Subg$. Because all coefficients not contained in $\hat\Coe_\Aset$ must be zero and do not contribute to the lasso fit, equating (\ref{eq:subgradset}) to the zero vector gives
\begin{equation}
\Des_\Aset^\T(\tgt - \Des_\Aset\hat\Coe_\Aset) = \lambda \hat\Subg_\Aset.\label{eq:subgradAset}
\end{equation}

Therefore $\lambda \hat\Subg_\Aset \in \rowspace(\Des_\Aset)$, and 
$\Des_\Aset^\T (\Des_\Aset^\T)^+ \lambda \hat\Subg_\Aset = \lambda \hat\Subg_\Aset$ where $X^+$ indicates the Moore-Penrose pseudoinverse of $X$. Further algebraic manipulation of (\ref{eq:subgradAset}) gives
\begin{align*}
\Des_\Aset\hat\Coe_\Aset & = \Des_\Aset \left[ \Des_\Aset^+(\tgt - (\Des_\Aset^\T)^+\lambda \hat\Subg_\Aset)\right].
\end{align*}
Note that $\Des_\Aset \hat \Coe_\Aset$ is the optimal lasso {\bf fit}, which is unique even though the coefficients providing that fit may not be \citep{tibshirani2013}. The set of all optimal coefficient vectors is given by
\begin{equation*}
\{\Coe_\Aset : \Coe_\Aset = \Des_\Aset^+(\tgt - (\Des_\Aset^\T)^+\lambda \hat\Subg_\Aset) + \mathbf{z}\}
\end{equation*}
for $\mathbf{z} \in \nullspace(\Des_\Aset)$ and $\sgn(\Des_\Aset^+(\tgt - (\Des_\Aset^\T)^+\lambda \hat\Subg_\Aset) + \mathbf{z}) = \hat\Subg_\Aset$.
If $\Des_\Aset$ has full rank then $\nullspace(\Des_\Aset) = \mathbf{0}$ and we have a unique optimal coefficient vector obtained by setting $\mathbf{z} = \mathbf{0}$.
\end{proof}

\begin{proof}[Of Lemma \ref{lm:AIN}]
Suppose that $\Des_\Aset$ does not have full rank. Then for some column $i$ of $\Des_\Aset$, there exist weights $c_j$ such that
\begin{equation}
\DesAset{i} = \sum_{j \ne i} c_j \DesAset{j}. \label{eq:colsum}
\end{equation}
Recall that each index has a corresponding $\subgAset{i} \in \{1,-1\}$ as defined in (\ref{eq:subgradset}). It follows that 
\begin{equation*}
\DesAset{i} = \sum_{j \ne i} (c_j \subgAset{i} \subgAset{j}) \cdot \frac{\subgAset{j}}{\subgAset{i}} \DesAset{j}.
\end{equation*}
By definition, $\frac{\subgAset{j}}{\subgAset{i}} \in \{-1,1\}$. We will now show that the weights $c_j \subgAset{i} \subgAset{j}$ sum to 1. Recall from (\ref{eq:subgradAset}) that $(\Des_{\Aset,i})^\T(\y - \Des\hat\Coe) = \lambda_i \subgAset{i}$. Therefore, using (\ref{eq:colsum}) we have
\begin{align*}
(\DesAset{i})^\T (\y - \Des\hat\Coe) & = \sum_{j \ne i} c_j (\DesAset{i}^\T) (\y - \Des\hat\Coe) \\
\subgAset{i} & = \sum_{j \ne i} c_j \subgAset{j} \\
1 & = \sum_{j \ne i} c_j  \subgAset{i} \subgAset{j}
\end{align*}
This establishes that if $\Des_\Aset$ does not have full rank, then its columns are not AIN because we can produce signs $s_i = \frac{\subgAset{j}}{\subgAset{i}}$ and weights $w_i = c_j \subgAset{i} \subgAset{j}$ as witnesses.
\end{proof}

\section*{Appendix B} \label{ap:setup}

In order to evaluate gLOP in a controlled fashion and to compare it with the dirty model, we conduct experiments with varying numbers of features and patients. We draw elements of the design matrices from a Gaussian with 0 mean and unit variance, and we generate observations $\y^k$ by adding Gaussian noise ($0$ mean, unit variance) to $X\Truep^k$ for each of $\kappa$ patients, where $\Truep^k$ gives the true model coefficients for that patient. 

First, we explored differences in the output of gLOP versus the dirty model using a small example ($p = 4, \kappa = 5,n = 64$) with the following parameters:
\begin{equation*}
\Truep^1 = \Truep^2 = \Truep^3 = (0,0,3,3)^\T,~~
\Truep^4 = (0,0,-3,3)^\T,~~\Truep^5 = (0,3,0,3)^\T
\end{equation*}
Identical data sets were used for each algorithm with $\lambda_{{\g}/B} = 5$ and $\lambda_{L/S} = 10$ was chosen by cross-validation. 
Based on this example, we observed that gLOP's $\hat{\g}+\hat{L}_k$ recovers the true model parameters for patient $k$ quite closely, although false inclusions were present in two of the patients. In contrast, the parameters recovered by the dirty model did not capture any variation between patients in $\hat S$, which was exactly zero, and induced little variation between patients in $B$. We found that in our experiments it was often impossible to find a pair of $\lambda_B$ and $\lambda_S$ such that both $B$ and $S$ contained non-zero values.

We also conduct larger-scale experiments to evaluate predictive performance; in these we use three different patient-types with true parameters $\Trues^1$, $\Trues^2$, and $\Trues^3$, given by
\begin{equation*}
\Trues^1 = (\underbrace{3, ..., 3}_{1,...,\frac{p}{4}}, \underbrace{0, ..., 0}_{\frac{p}{4}+1,...,p})^\T, 
\Trues^2 = (\underbrace{3,  -3, ..., 3, -3}_{1,...,\frac{p}{2}}, \underbrace{0,..., 0}_{\frac{p}{2}+1,...,p})^\T, 
\Trues^3 = (\underbrace{-3,  3, ..., -3, 3}_{1,...,\frac{p}{8}}, \underbrace{0,..., 0}_{\frac{p}{8}+1,...,p})^\T.
\end{equation*}
Note that 3/4 of the entries in $\Trues^1$ are zero, $\Trues^2$ is less sparse than $\Trues^1$, and $\Trues^3$ is more sparse than $\Trues^1$. 
In each experiment, $\frac{\kappa}{8}$ patients are generated using each of $\Trues_2$ and $\Trues_3$; the remaining $\kappa - \frac{\kappa}{4}$ patients are generated using $\Trues_1$. 

To choose values for $\lambda_g$ and $\lambda_L$, we perform 10-fold cross validation over a grid of points $(\lambda_{\g},\lambda_L)$ but constrain the results to include only cases where $\lambda_{\g} \le \lambda_L$. The folds are stratified across patients. The grid ranges from 0 to 100 in steps of 5 along each dimension. While this may appear coarse, note that our loss function (squared error) is not normalized by $n$ as is sometimes common, in order to compare more directly with the original dirty model implementation. Thus the range of useful $\lambda$ is wider in our experiments than in other lasso applications. Once CV error for all pairs has been calculated, the pair of $\lambda_{\g}$ and $\lambda_L$ with the lowest error is selected.  We break ties in favour of larger $\lambda_L$ and then larger $\lambda_{\g}$ in order to obtain the sparsest model. We evaluate the prediction error for each learned model using a large ($n = 1000$) held-out test set.


\clearpage

\section*{Appendix C}
\label{ap:detailedresults}

\begin{table}[h!]
\centering
\caption{Test error results for gLOP versus the dirty model. All errors attained by the dirty model and the lasso were statistically significantly worse than those of gLOP ($\textit{p} < 0.05$ on an independent two-sample {\it t}-test).}
\begin{tabular}{llllll}
  \toprule
  $p$ & $\kappa$ & $n$ & gLOP    & Dirty M. & Lasso\\ \hline
  16  & 16  & 64 & 1.3931  & 6.3718     & 39.5652\\
    &     & & $\pm$0.0637  & $\pm$0.2599  &    $\pm$0.5335 \\ \hline 
16  & 128 & 64 & 1.4602  & 2.2171    & 39.8316  \\
    &     & & $\pm$0.0237  & $\pm$0.121  &  $\pm$0.1668  \\ \hline
128 & 16  & 64 & 93.6959 & 141.1617  & 306.5222 \\
    &     & & $\pm$7.5097  & $\pm$9.5854  &  $\pm$ 3.6947 \\ \hline
128 & 128 & 64 & 73.9881 &  141.1624     &    307.3203  \\
    &     & & $\pm$2.7155  &  $\pm$3.7506  &  $\pm$1.2835 \\
\bottomrule
\end{tabular}
\label{table:error}
\end{table}


\end{document}